\relax
\documentclass[letterpaper]{article} 
\usepackage{aaai18}  
\usepackage{times}  
\usepackage{helvet}  
\usepackage{courier}  
\usepackage{url}  
\usepackage{graphicx}  
\usepackage{natbib}

\frenchspacing  
\setlength{\pdfpagewidth}{8.5in}  
\setlength{\pdfpageheight}{11in}  

\usepackage{graphics} 
\usepackage{graphicx} 
\usepackage{epsfig} 
\usepackage{times} 
\usepackage{amsmath} 
\usepackage{amssymb}  
\usepackage{algorithm}
\usepackage{algorithmic}
\usepackage{subcaption}

\usepackage{bm}

\newtheorem{theorem}{Theorem}

\newtheorem{prop}{Proposition}

\usepackage{todonotes}

  \pdfinfo{
/Title (2018 Formatting Instructions for Authors Using LaTeX)
/Author (AAAI Press Staff)}
\setcounter{secnumdepth}{0}  
 \begin{document}
%
\title{Guiding the search in continuous state-action spaces by learning an action
sampling distribution from off-target samples}
\author{Beomjoon Kim, Leslie Pack Kaelbling and Tom\'as Lozano-P\'erez
}
\maketitle

\begin{abstract}
In robotics, it is essential to be able to plan efficiently in high-dimensional 
continuous state-action spaces for long horizons. For such complex planning problems,
unguided uniform sampling of actions until a path to a goal is found is hopelessly
inefficient, and gradient-based approaches often fall short when the optimization 
manifold of a given problem is not smooth. In this paper we present an approach that
guides the search of a state-space planner, such as A*, by learning an action-sampling
distribution that can generalize across different instances of a planning problem. 
The motivation is that, unlike typical learning approaches for planning for continuous
action space that estimate a policy, an estimated action sampler is more robust to error 
since it has a planner to fall back on. We use a Generative Adversarial Network (GAN),
and address an important issue: search experience consists of a relatively large number
of actions that are not on a solution path and a relatively small number of actions that 
actually are on a solution path. We introduce a new technique, based on an importance-ratio
estimation method, for using samples from a non-target distribution to make GAN learning 
more data-efficient. We provide theoretical guarantees and empirical evaluation in three
challenging continuous robot planning problems to illustrate the effectiveness 
of our algorithm.
\end{abstract}

\newcommand{\Dq}{\mathbf{A}_q}
\newcommand{\Dp}{\mathbf{A}_p}
\newcommand{\D}{\mathbf{A}}
\newcommand{\hatD}{\hat{\mathbf{A}}}
\newcommand{\KL}{{\rm KL}}

\newcommand{\s}{s}
\newcommand{\act}{a}
\newcommand{\aspace}{\mathcal{A}}
\newcommand{\sspace}{\mathcal{S}}
\newcommand{\transfcn}{T}
\newcommand{\prob}{\omega}
\newcommand{\Prob}{\Omega}
\newcommand{\so}{s_0}
\newcommand{\sg}{s_g}
\newcommand{\valfcn}{V}
\newcommand{\psample}{q_{\act|\s}}
\newcommand{\ptarget}{p_{\act|\s}}
\newcommand{\ptargetset}{\mathcal{P}_{a|s}}
\newcommand{\pideal}{p^*_{\act|\s}}
\newcommand{\dlabel}{L}
\newcommand{\pg}{p_G}
\newcommand{\xp}{a_p^{(i)}}
\newcommand{\xq}{a_q^{(i)}}
\newcommand{\xg}{a_g^{(i)}}
\newcommand{\np}{n_p}
\newcommand{\nq}{n_q}
\newcommand{\ngen}{n_g}
\newcommand{\xw}{a_w^{(i)}}
\newcommand{\pw}{p_w}
\newcommand{\hatw}{\hat{w}}
\newcommand{\x}{a}
\newcommand{\suppq}{\mathcal{A}_q}
\newcommand{\suppp}{\mathcal{A}_p}

\section{Introduction}
The ability to efficiently plan in domains with
continuous state and action spaces  is a crucial 
yet challenging skill to obtain for a robot. 
For the classical path planning problem of getting
from an initial state to a goal state, the motion
planning community has found that random sampling
strategies or gradient-based approaches
work reasonably well~\citep{rrt,ZuckerIJRR13}.

In a variety of important planning problems, however, 
these approaches fall short. Consider the problem in 
Figure~\ref{fig:cycle_ex}, where the robot has to find 
a collision-free inverse kinematics solution
to reach the orange object by reconfiguring the green objects.
An unguided uniform sampling approach performs poorly since the state 
space is extremely high-dimensional, consisting of 
the combined configuration spaces of the robot and many objects.
A gradient computation is also difficult, since the robot 
has to make choices that are both discrete, 
such as which object to move, and continuous, 
such as where to place the chosen object, making the problem's
optimization manifold non-smooth. This type of hybrid 
optimization problem is difficult to solve efficiently.

\begin{figure}
  \centering
  \begin{subfigure}[t]{0.5\textwidth}
		  \centering
      \includegraphics[height=2cm]{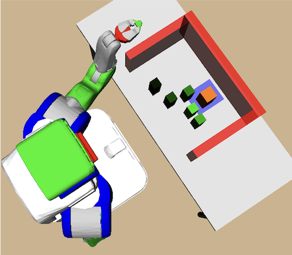}
		  \includegraphics[height=2cm]{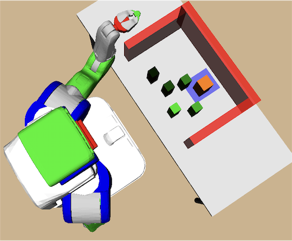}
      \caption{An example of the reconfiguration task. The movable obstacles are colored green, and the target object is colored with orange. }
		  \label{fig:cycle_ex}
  \end{subfigure}
  \begin{subfigure}[t]{0.5\textwidth}
      \centering
		  \includegraphics[height=2cm]{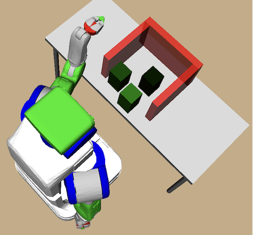}
		  \includegraphics[height=2cm]{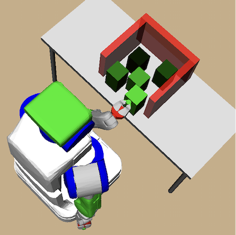}
      \caption{ An example of the bin packing task. The color of objects indicate 
the order of placement, and the darker the earlier.}
		  \label{fig:deadend_ex}
  \end{subfigure}
  \caption{Examples of reconfiguration and bin packing. For both tasks, the robot can only grasp objects from the side.}
\end{figure} 

A promising approach for such complex hybrid planning problems 
is to sample some predefined number of actions in each state, 
and then treat the resulting problem as a heuristic search
over discrete states and actions. 
A reconsideration strategy, which reconsiders states that 
has been previously expanded, can be employed to guarantee 
completeness.

The main distinction from a search in discrete state and action spaces is
that, in the discrete case, having a good heuristic
function is sufficient for efficiently finding a solution. A perfect
heuristic, for instance, would find a plan without any exploration.
However, in the continuous state and action spaces this is not true;
if there is a very small volume of actions that lead to the goal, then
even with a perfect heuristic we might inefficiently sample actions
and reject most of them. Moreover, if the heuristic is not perfect,
as is typically the case, having samples in undesirable regions of
the action space could lead to expanding nodes that do not efficiently
lead to a goal state.

Based on this observation, our objective in this paper is to learn 
from search 
experience an action sampling distribution  that will guide the search of a planner
that has an imperfect heuristic. 
We use a generative adversarial network (GAN), 
a recent advance in generative model learning, to approximate this 
distribution~\citep{GoodfellowNIPS2014}.
Unlike other methods, a GAN only requires a forward pass through a neural 
network to generate a sample, and does not require any prior
distributional assumptions.
The main challenge in learning a generative model from search
experience data is that in a successful 
episode of search, there is a large number of state and action
pairs that we sample but are not known to be on a trajectory to the goal,
and only a relatively small number of samples that are on a 
trajectory to the goal. We will refer to samples on a
successful trajectory as \emph{on-target} samples, to indicate
that they are from a target distribution that we would like to learn,
and the rest as \emph{off-target} samples.
While we could just use the small number of on-target samples, 
learning would be much more effective 
if we can find a way to use the abundant off-target samples as well.

In light of this, we propose a principled approach that can
learn a generative model from both on- and off-target samples.
To do this, we estimate the importance-ratio between the on-target 
and off-target distributions using both classes of
samples. We then extend GAN to introduce a new 
generative-model learning algorithm, called 
generative adversarial network with direct importance estimation (GANDI),
that uses the importance ratio to learn from
not only on-target samples, but also from off-target samples.
While this algorithm is domain independent, 
we will demonstrate its effectiveness in learning a target
action-sampling distribution.
We theoretically analyze how the importance-ratio estimation 
and the difference between target and non-target distributions
affect the quality of the resulting approximate distribution. 

We evaluate GANDI in three different planning problems
that commonly occur in warehousing applications.
We show that our algorithm outperforms a 
standard uniform sampler and a standard GAN in terms of 
planning  and data efficiency.

\section{Related work}
Our work touches upon four different topics: continuous state-action 
space planning, learning to guide planning, importance-ratio 
estimation, and generative model learning. 
We provide descriptions of approaches in these 
areas that are closest to our problem in terms of motivation and technique.

In the area of continuous-space planning, there are several approaches
that attempt to sample actions by employing optimistic optimization
methods for black-box functions. Mansley
et. al~\citeyearpar{MansleyICAPS2011} assign a hierarchical optimistic
optimization (HOO)~\citep{BubeckNIPS2009} agent at each state to
decompose the action space to sample promising ones. A similar
approach is taken by Bus\c{o}niu
et. al,~\citeyearpar{LucianADPRL2011}, where a simultaneous optimistic
optimization (SOO) agent is assigned at each
state~\citep{MunosNIPS11}. There are other approaches to more
specialized continuous state and action problems, such as
motion-planning and task-and-motion planning; these include
sample-based methods~\citep{rrt,prm,GarrettRSS2017,VBrownWAFR2016},
and gradient-based optimization methods
~\citep{ZuckerIJRR13,SchulmanIJRR14,ToussaintIJCAI15}.  The main
objective of all these approaches is to construct, on-line, an optimal
or a feasible plan without any off-line learning. No knowledge of
solving one problem instance is transferred to another.

In learning to guide search, there is a large body of work 
that attempts to learn a policy or a value function off-line,
 and then use this information during an on-line search 
to make planning efficient. These methods are usually applied to
discrete-action problems, where a recent prominent example is
 AlphaGo~\citep{alphago}. 
In that paper, Silver et. al train a supervised policy off-line, 
using imitation 
learning and train a value function using self-play; they then guide 
Monte Carlo Tree Search (MCTS) in an on-line phase using these functions.
In a similar line of work, Gelly et al.~\citeyearpar{GellyICML07} also integrates
an off-line learned value function with MCTS. For
learning to guide search in continuous-space planning problems, 
Kim et. al~\citeyearpar{KimICRA17}  describe an approach for 
predicting constraints on the solution space rather than a value 
function or an action itself.
The intuition is that a constraint is much more easily 
transferable across problem instances than a specific action in complex 
planning problems. We share this intuition, and we
may view the learned action distribution as constraining the search
space of a planning problem to promising regions.

Two recent advancements in generative-model learning, 
GANs~\citep{GoodfellowNIPS2014} 
and Variational Auto Encoders (VAEs)~\citep{VAE}, are 
appealing choices for learning an action-sampling 
distribution because inference is 
simply a feed-forward pass through a network.
GANs are especially appealing,
because for generic action spaces, we do not have any metric information
 available. VAEs, on the other hand, require a metric in an action space in 
order to compute the distance between a decoded sample and a true sample. Both
of these methods require samples  to be from a target distribution
that one wishes to learn. Unfortunately, in our case we have limited access to
samples from the target distribution, and this brings us to the importance-ratio
estimation problem.

There is a long history of work that uses importance sampling to 
approximate desired statistics for a target distribution $p$ using 
samples from another distribution $q$,
for example, in probabilistic graphical models~\citep{KollerFriedman2009} and
reinforcement learning problems~\citep{PrecupICML2001,SuttonBarto1998}.
In these cases, we have a surrogate distribution $q$ that is cheaper 
to sample than the target distribution $p$. 
Our work shares the same motivation as these problems, 
in the sense that in search experience data, samples that are on 
 successful trajectories are expensive to obtain, while other 
samples are relatively cheaper and more abundant.

Recently, importance-ratio estimation has been studied for the 
problem of covariate shift adaptation, which closely resembles our setting. 
Covariate shift refers to the situation where we have samples from 
a training distribution that is different from the target distribution. 
Usually, an importance-ratio
estimation method~\citep{KanamoriJMLR2009,SugiyamaNIPS2008,HuangNIPS2007} is 
employed to re-weight the samples from the training
distribution, so that it matches the target distribution 
for supervised learning.  We
list some prominent approaches here.
In kernel-based methods for estimating the importance~\citep{HuangNIPS2007},
authors try to match the mean of samples of $q$ and $p$ in a
feature space, by re-weighting the samples from $q$. In the direct estimation
approach, Sugiyama et al.~\citeyearpar{SugiyamaNIPS2008} try to minimize the
KL divergence between the distributions $q$ and $p$ by re-weighting $q$. In
another approach, Kanamori et al.~\citeyearpar{KanamoriJMLR2009} directly minimize 
the least squares objective between the approximate importance-ratios and the
 target importance-ratios. All these formulations yield convex
optimization problems, where the decision variables are parameters of a linear
function that computes the importance weight for a given random variable. Our
algorithm extends the direct estimation approach using a deep neural network, 
and then applies it for learning a generative model using off-and-on target samples.

\section{Background}
\subsection{Planning in continuous spaces}
A deterministic continuous action and state space planning
 problem is a tuple $[\sspace, \aspace,s_0, G, T]$ where 
$\sspace$ and $\aspace$ are state and action spaces,
 $T:\sspace \times 
\aspace \rightarrow \sspace$ is a transition function that
maps a state and an action to a next state, $s_0 \in \sspace$ is the
initial state and $G \subset \sspace$ is a goal set.

An instance of a planning problem consists of a tuple $(s_0,G,\prob)$, 
where $\prob\in \Omega$ represents parameters of a problem instance. 
While the state changes according to $T$ when an action is executed, 
the parameters represent aspects of the problem that do not change within
the given instance. For example, a state might represent
poses of objects to be manipulated by a robot,
 while $\prob$ might represent their
shapes.  Given a planning problem instance, a heuristic state-space planner
requires a heuristic function $H:\sspace \rightarrow \mathbb{R}$
that estimates a cost-to-go. This function might be learned, or
designed by a user. The planning algorithm shown in 
Algorithm~$\ref{alg:planalgo}$ describes an extension of
discrete-space heuristic planner to continuous-spaces. This version 
of the algorithm is greedy with respect to
$H$, but it is straightforward to arrange for a version that is more
like A*, by taking path cost to the current state into account as well.

The key distinction from a discrete state-action space heuristic search 
algorithm is the action-sampling distribution $\psample$ and the 
reconsideration strategy for a node.
While in the discrete case we push all the neighbors of the current
node, here we sample $k$ actions and then push the resulting nodes.
Without any domain-specific knowledge, $\psample$ is a uniform distribution
in some bounded region in $\aspace(s,\prob)$ that depends on the 
current state and the parameters of the problem. 
The reconsideration strategy refers to 
pushing the node that we just popped  back to the queue, in order to 
guarantee probabilistic completeness.

Given this setup, we can see that even a perfect heuristic will not be
helpful unless good action samples are generated. Moreover, in most
cases we have an imperfect heuristic function. This motivates the
problem of learning an action-sampling distribution, which we formulate next
by considering what information we can extract from search experience.

\begin{algorithm}[tb]
\small
 \caption{Search($s_0,G,\transfcn,k, H, \psample$)}
 \label{alg:planalgo}
\begin{algorithmic}
\STATE $Q = {\it InitQueue}({\it Node}(s_0))$
\WHILE{True}
\STATE $n= Q.pop(\arg\min_{n\in Q} H(n.s))$ //dequeue the cheapest $n$
\IF {$n.s \in G$}
\STATE {\bf return} $n$
\ENDIF
\FOR{$i \in 1..k$} 
\STATE Sample $a \sim \psample(a \mid n.s)$
\STATE $Q.push( Node(T(n.s,a) )$
\ENDFOR
\STATE $Q.push(n)$ // reconsideration
\ENDWHILE
\end{algorithmic}
\end{algorithm}


\subsection{Problem formulation}
Ideally, our objective would be to learn a distribution that
assigns zero probability to actions that are not on an
optimal trajectory to $G$, and non-zero probability to actions on 
an optimal trajectory. Such a distribution would yield a path to
a goal without any exploration, regardless of the quality of the given
heuristic function.

Unfortunately, in sufficiently difficult problems, optimal planners are 
not available, therefore it is 
impossible to determine  whether an action was on an optimal 
path from the search-experience data. Thus, we consider
an alternative objective: learn a distribution that 
assigns low probability to actions
that lead to undesirable parts of the state space.
 We now examine two types of undesirable actions that lead
to such states.

The first type is {\em dead-end actions} that lead to {\em dead-end states}. 
A state $s$ is a dead-end state if there is no feasible path
from $s$ to a state in $G$.  Dead-end states are clearly to be avoided if
possible, because all search effort below them in the search tree is
guaranteed to be wasted. An example of a dead-end state is shown in
Figure~\ref{fig:deadend_ex}. Here, the robot is asked to pack
five objects into the relatively tight bin. It cannot move an object
once it is placed, so if the robot places a first few objects
in the front of the bin like in Figure~\ref{fig:deadend_ex} (left),
then it will be in a dead-end state. 

The second type is {\em no-progress actions} that transition to a state
that has the same or greater true cost-to-go as the current state.
 An example is shown in Figure~\ref{fig:cycle_ex}, where 
the robot has to reconfigure
poses of the green objects to find a collision-free space to
reach the orange object with its left arm. When the robot moves 
the light green object as shown in Figure~\ref{fig:cycle_ex} 
(left) to (right), this action does not constitute progress towards making room for 
reaching the target object.

Now, our objective is to learn an action distribution $\ptarget$ that assigns
low probabilities to these two types of actions.
We denote the $m$ search experience episodes for training data,
 collected using a
domain-independent action sampler $\psample$, as
$\Big\{ \Big[(s_i,\xp)_{i=1}^{n_p},(s_i,\xq)_{i=1}^{n_q},(\prob^{(j)},s_0^{(j)},G^{(j)}) \Big] \Big\}_{j=1}^{m}  $
where $\xp$ is an action on the trajectory from $s_0^{(j)}$ to $s\in G^{(j)}$, 
$\xq$ is an action in the search tree, but not the solution
trajectory, and $s_i$ is the state in which $\xp$ or $\xq$ was executed.
$\np$ and $\nq$ denote the number of state and action pairs that
are on the trajectory to the goal and the rest, respectively.

Fortunately, the distribution of $(s,a)$ pairs on successful
trajectories in the data has the following properties: they have
zero probability assigned to dead-end states and actions, dead-end
actions and states cannot occur on a path to the goal. They also have
low probability assigned to no-progress actions, because most
actions, though not necessarily all, are making progress.  However, we
cannot say anything about $(s,a)$ pairs not on a successful trajectory:
they may or may not be desirable.  Therefore, we will call the $\xp$
values \emph{on-target} samples, and the $\xq$ as \emph{off-target}
samples. Our algorithm, GANDI, uses both of these two sources of data
to learn a target distribution.

\subsection{Generative Adversarial Networks}
The objective of a GAN is to obtain a
generator $G$ that transforms a random variable $z$, 
sampled usually from a Gaussian or a uniform distribution, to
a random variable of interest which in our case is an action $a$.
A GAN learns $\pg$, an approximation of the target distribution, which
in our case is $\ptarget$. For the purpose of
description of GAN, we will treat the distributions 
as unconditional; they can be directly extended to condition on 
$\s$ by viewing the transformation as mapping $(\s,z)$ to $\act$.

We denote the on-target samples
$\Dp := \{\xp\}_{i=1}^{\np}$, and the samples generated by $G$ as 
$\{\xg\}_{i=1}^{n_g}$. To learn the generator $G$, a GAN alternates
between optimizing a function $D$, which tries to 
output 1 on on-target samples and output 0 on samples generated by $G$,
and optimizing $G$, which tries
 to generate samples that cause $D$ to output 1. 
This leads to the following objectives:
\begin{align}
D  &= \arg\min_D\sum_{i=1}^{\np} \log D(\xp) + \sum_{i=1}^{\ngen} \log(1-D(\xg))
\label{eq:ganD}\\
G  &= \arg\max_G \sum_{i=1}^{\ngen} \log D(\xg)
\label{eq:ganG}
\end{align}
where $\np=\ngen$ to force $D$ to assume the classes are equally likely.
After training, given a sample of $z \sim p_z$, the neural
 network $G(z)$ outputs $\x$ with probability
$\pg(\x)$.

\subsection{Direct importance ratio estimation}
We now describe an approach to estimate  importance weights between the 
target and uniform distribution,
$
w(\x;\s) = \ptarget(\x|\s)/\psample(\x|\s)
$ using a deep neural network (DNN).  If we had these weights, then
we could use $\Dq$ to augment $\Dp$ used for training the 
generative model $\pg$ to approximate $\ptarget$. While
there are several methods for estimating 
$w(\x)$~\citep{HuangNIPS2007,KanamoriJMLR2009,SugiyamaNIPS2008}, 
we will use the least squares
approach of Kanamori et al.~\citeyearpar{KanamoriJMLR2009} because it integrates easily 
into DNN-based methods. In this approach, we approximate $w$ with $\hatw$ using 
the objective function 
\begin{align*}
J(\hat{w}) &= \int_\x (\hat{w}(\x)-w(\x))^2 q(\x) d\x \, .
\end{align*}
In practice, we optimize its sample approximation version, $\hat{J}(\hatw)$, 
which yields
\begin{align}
\hat{w} 
&= \arg\min_{\hat{w}} \sum_{i=1}^{\nq} \hat{w}^2(\xq) - 2\sum_{i=1}^{\np} \hat{w}(\xp), \text{ s.t } \hatw(\x) \geq 0 \label{eq:lsif}
\end{align}
where the constraint is enforced to keep the importance weights positive.
Intuitively, $\hatw$ attempts to assign high values  to $\xp$ and 
low values $\xq$, to the degree allowed by its hypothesis class. 

The method was originally proposed to be used with a linear architecture, 
in which $\hatw(\x) = \theta^T\phi(\x)$; this implies there is a 
unique global optimum as a function of $\theta$, but 
requires a hand-designed feature 
representation $\phi(\cdot)$.
For robot planning problems, however,
 manually designing features is
difficult, while special types of DNN architectures, 
such as convolutional neural networks, may effectively
learn a good representation. 
In light of this, we represent $\hatw$ with a DNN. 
The downside of this  strategy is the loss of
convexity with respect to the free parameters $\theta$, but we have found that the 
flexibility of representation offsets this problem.


\section{Generative Adversarial Network with Direct Importance Estimation}
\label{sect:algo}
In this section, we introduce our algorithm, GANDI, 
which can take advantage of 
off-target samples from $\psample$ using importance weights. 
We first describe how to formulate the objective for training 
GANs with importance weights. For the purpose of exposition, we begin by 
assuming we are given $w(\x)$, and we only have samples 
from the off-target distribution, $\Dq$, and none from the 
target distribution. Using importance weights $w(\x)$,  
the objective for the discriminator in GANs becomes
\begin{align}
D&= \arg\min_D \sum_{i=1}^{\nq} w(\xq)\log D(\xq) + \sum_{i=1}^{\ngen} \log (1-D(\xg))
\label{eq:igan}
\end{align}
In trying to solve the equation~\ref{eq:igan},  it is
 critical to have balanced training set sizes $\np$ and $\ngen$.
In the importance weighted version of the GAN shown in 
equation~\ref{eq:igan}, however, the sum 
of the weights  $c = \sum_{i=1}^{n_q} w(\xq)$, serves as an 
{\em effective sample size} for the data $\Dq$.  
To achieve a balanced objective, we might then select $\ngen$ 
to be equal to c.  

Taking this approach would require adjusting the the GAN objective 
function and modifying mini-batch gradient descent algorithm to
use  uneven mini-batch sizes in every batch in training,
 which is awkward in many
 high-performance NN software packages.  Instead,
 we develop a method for bootstrapping $\Dq$ that allows us to use existing
 mini-batch gradient descent codes without modification.
Specifically, instead of multiplying each off-target sample by its importance weight, 
we bootstrap (i.e. re-sample the data from $\Dq$ with replacement), 
with a probability $\pw(\x)$, where
$
\pw(\x) = \frac{w(\x)}{\sum_{i=1}^{\nq} w(\xq)} \, .
$
This method allows us to generate a dataset $\hatD$ in which the 
expected number of instances of each element of 
$\D$ is proportional to its importance weight.
Moreover, since we bootstrap, the amount of training data remains the same, 
and $D$ now sees a balanced number of samples effectively drawn from 
$p(\x)=w(\x)q(\x)$ and $\pg$.

One can show that $\pw$ is actually proportional to $p$.
\begin{prop} For $ \x \in \Dq$,
$$p_w(\x) = k\cdot p(\x)\text{ where } k=\frac{1}{ \sum_{i=1}^{n_q}w(\xq)} \, .$$ 
\end{prop}
This means that samples drawn from the importance-reweighted data $\D$
 can play the role of $\Dp$ in the GAN objective.

We now describe some practical details for approximating $w(\x)$ with
$\hatw(\x)$, whose architecture is a DNN. Equation \ref{eq:lsif} can be
readily solved by a mini-batch gradient-descent algorithm implemented
using any readily available NN software package.  The non-negativity
constraint can also be straight-forwardly incorporated by simply using
the rectified linear activation function at the output
layer\footnote{In practice, this often lead the weights to
  converge to 0.  Although this can be avoided with a careful
  initialization method, we found that it is effective to just
  use linear activation functions, and then just set $w(\x)=0$ if
  $w(\x)<0$.}.  Now, with estimated importance weights and bootstrapped
samples, the objective for $D$ shown in equation~\ref{eq:igan} becomes
\begin{align}
 \hat{D} = \arg\min_D\sum_{i=1}^{\nq} D(\xw) + \sum_{i=1}^{\ngen} \log(1-D(\xg)) \label{eq:gandiD} \, ,
\end{align}
 where $\xw$ denotes a bootstrapped sample from $\Dq$, and $\nq=\ngen$.
This involves just using $\Dq$, but 
in practice, we also use $\Dp$, by simply augmenting the  dataset $\Dq$ with 
$\D:=\Dq \cup \Dp$, and then applying $\hatw(\cdot)$ to $\D$ for bootstrapping, yielding 
final dataset $\hatD$.  Algorithm~\ref{alg:GANDI}
contains the code for GANDI.

We illustrate the result of the bootstrapping with a simple example, shown 
in Figure \ref{fig:toyex}, where we have a Gaussian mixture model for both 
on-and off-target distributions $p$ and $q$, where $p$ is a mixture of two
Gaussians centered at $(1,1)$ and $(3,1)$, and $q$ is a mixture of three 
Gaussians at $(1,1), (3,1),$ and $(2,2)$ 
with larger variances than those of $p$.
\begin{figure*}
\centering
\begin{subfigure}[t]{0.3\textwidth}
  \centering
  \includegraphics[width=0.6\linewidth,height=2cm]{./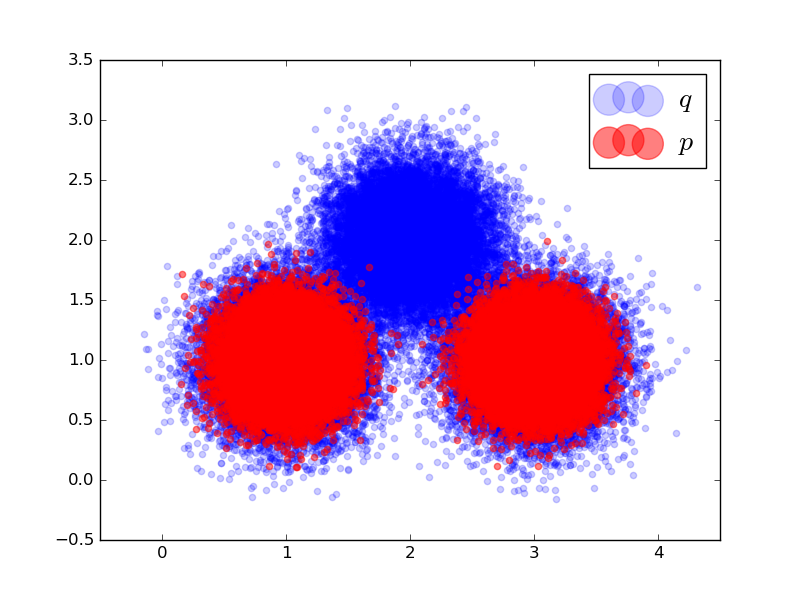}
  \caption{ Distributions $p$ and $q$}
\label{fig:pq}
\end{subfigure}
\begin{subfigure}[t]{0.3\textwidth}
  \centering
  \includegraphics[width=0.6\linewidth,height=2cm]{./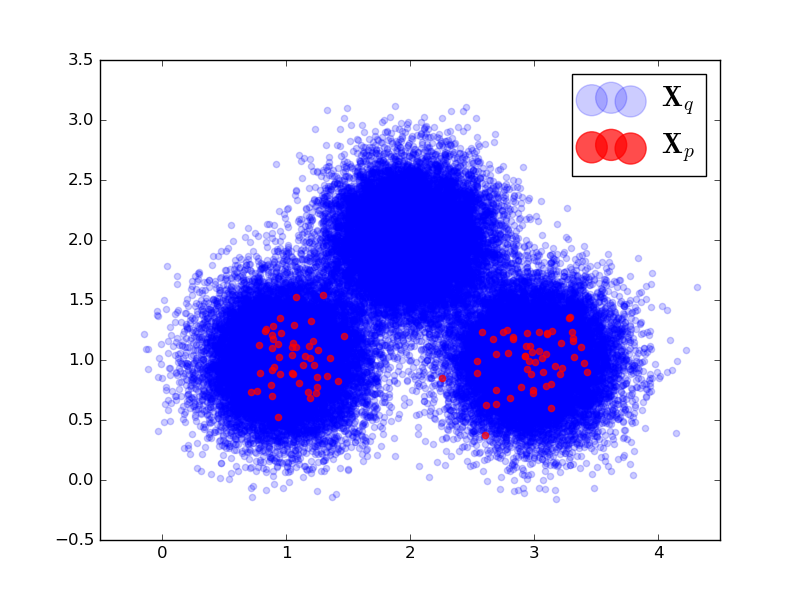}
  \caption{ Off- and on-target samples }
\end{subfigure}\\
\begin{subfigure}[t]{0.3\textwidth}
  \centering
  \includegraphics[width=0.6\linewidth,height=2cm]{./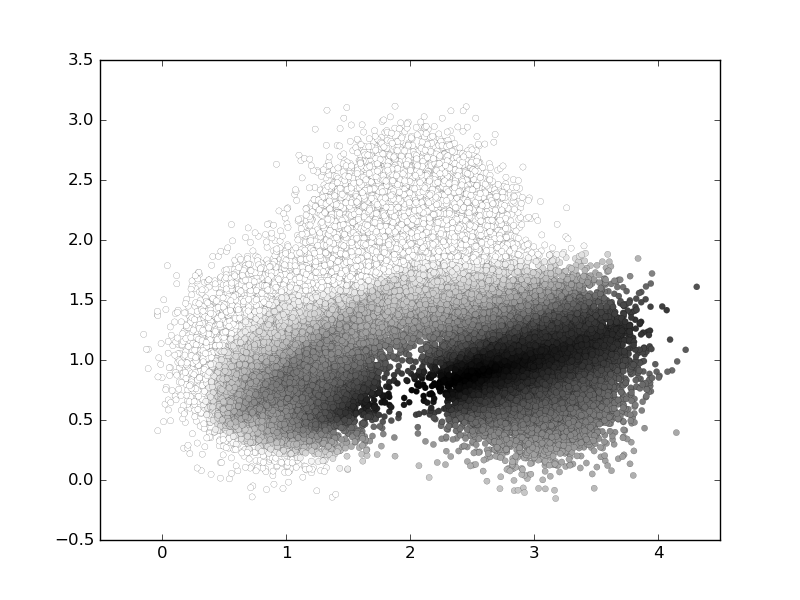}
  \caption{Importance weights of $\D$.}
  \label{fig:trainset_toy}
\end{subfigure}\hfill
\begin{subfigure}[t]{0.3\textwidth}
  \centering
  \includegraphics[width=0.6\linewidth,height=2cm]{./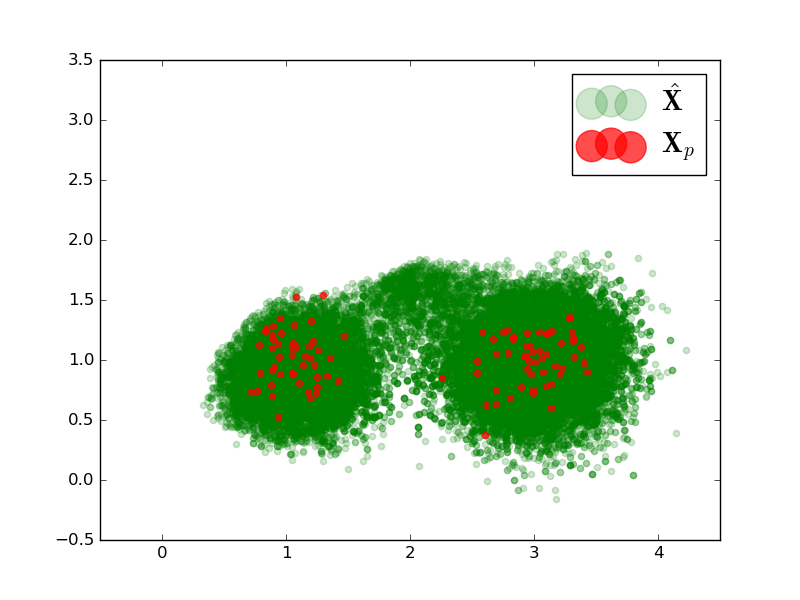}
  \caption{ Bootstrapped samples  }  
\end{subfigure}\hfill
\begin{subfigure}[t]{0.3\textwidth}
  \centering
  \includegraphics[width=0.6\linewidth,height=2cm]{./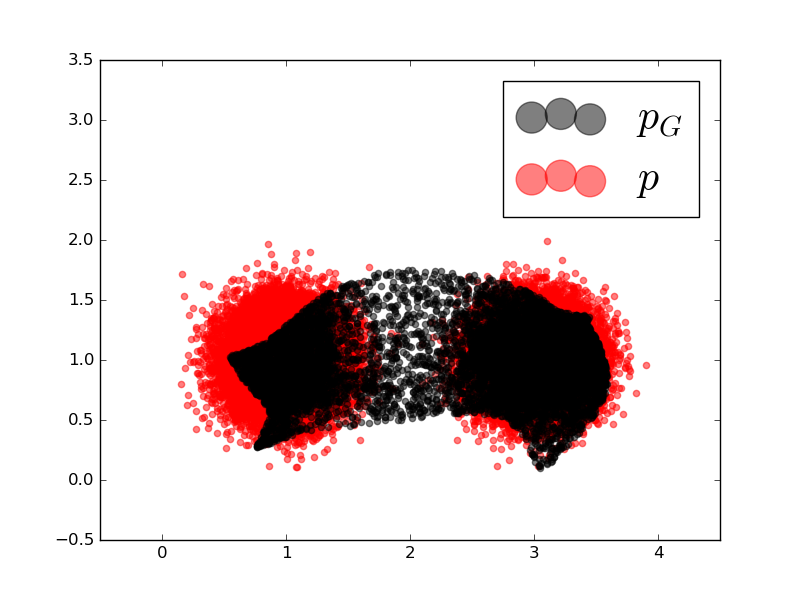}
  \caption{  $\pg$ and the target distribution $p$. }
\end{subfigure}
\caption{Example of importance weight estimation (c), bootstrapping (d),
 and samples from $\pg$ and $p$ (e) in an illustrative domain with Gaussian mixtures. }
\label{fig:toyex}
\end{figure*} 

\begin{algorithm}[tb]
\small
   \caption{GANDI($\Dp,\Dq$)}
   \label{alg:GANDI}
\begin{algorithmic}
\STATE $\hatw \leftarrow  EstimateImportanceWeights(\Dp,\Dq)$ // objective (\ref{eq:lsif}) with on-and off-target samples $\Dp$ and $\Dq$ \\
\STATE $\pw(\x) := \frac{\hatw(\x)}{\sum_{i=1}^{\nq}\hatw(\xq) + \sum_{i=1}^{\np}\hatw(\xp) }$ // define bootstrap probability distribution
\STATE $\D \leftarrow \Dp \cup \Dq$ 
\STATE $\hatD \leftarrow Bootstrap(\D, \pw)$ // sample $\D$ with probability $\pw$, with replacement
\STATE $G \leftarrow TrainGAN(\hatD)$ // train the GAN using objectives (\ref{eq:ganD}) and (\ref{eq:ganG}) with $\hatD$ as on-target samples
\STATE{\bf return} $G$
\end{algorithmic}
\end{algorithm}

\section{Theoretical analysis}
\label{sect:theory}
In this section, we analyze how error in importance estimation 
affects the performance of $\pg$ in approximating $p$.
The basic result on GANs, shown in the limit of infinite data,
representational and computational capacity, is that $\pg$ converges to
$p$~\citep{GoodfellowNIPS2014}, although subsequent papers have 
presented more subtle form of analysis~\citep{Arjovsky2017}.

Now, under the same assumptions, we consider the effect of using
importance weighted off-target data. If $w$ is exact, then $p(\x)=w(\x)q(\x)$
and the GAN objective is unchanged. If, however, we use an estimation of
importance weighting function $\hatw$, then the objective of $\hat{D}$,
the importance-weight corrected discriminator, 
differs from $D$ and they achieve different solutions. 

We wish to analyze the effect of importance estimation error on KL and
reverse-KL divergence between $p$ and $\pg$. First,
define $\rho = \sup_{\x \in \suppp} q(\x)/p(\x)$,
where $\suppp$ is the support of $p$.
We can see that $\rho >= 1$, with equality 
occurring when $p(\x) = q(\x)$ for all $\x$.

For the KL divergence, we have the following theorem.
\begin{theorem} If $w(\x) \geq \epsilon\ \forall \x\in \suppq$,
 and $J(\hatw) \leq \epsilon^2$, then 
$$ \KL(p||\pg) \leq \log \Big(\frac{1}{1-\epsilon\rho}\Big)\;\;.$$ 
\end{theorem}
Note that $0\leq \epsilon \rho \leq 1$ due to the
 condition $w(\x) \geq \epsilon$.
For reverse KL we have:
\begin{theorem}
If $J(\hatw) \leq \epsilon^2$,
$KL(p_G || p) \leq (1 + \epsilon) \log (1 + \epsilon\rho)$\;.
\end{theorem}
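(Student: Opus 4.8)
The plan is to follow the same template as the previous theorem but to exploit the \emph{upper} bound $\hatw(\x)\le w(\x)+\epsilon$ (rather than the lower bound used for $\KL(p\|\pg)$). First I would substitute the optimal generator $\pg(\x)=\hatw(\x)q(\x)$ from the Lemma, together with the identity $p(\x)=w(\x)q(\x)$, into the definition of the reverse divergence. Since $\pg(\x)/p(\x)=\hatw(\x)/w(\x)$, this collapses the integrand neatly:
\begin{align*}
\KL(\pg\|p) &= \int_\x \hatw(\x)q(\x)\,\log\frac{\hatw(\x)q(\x)}{p(\x)}\,d\x = \int_\x \hatw(\x)q(\x)\,\log\frac{\hatw(\x)}{w(\x)}\,d\x.
\end{align*}

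Next I would bound the logarithmic factor pointwise. From $J(\hatw)\le\epsilon^2$ we have $\hatw(\x)\le w(\x)+\epsilon$, so $\log\frac{\hatw(\x)}{w(\x)}\le\log\big(1+\frac{\epsilon}{w(\x)}\big)$, and the definition $\rho=\sup_{\x\in\suppp}\frac{1}{w(\x)}$ gives the uniform bound $\log\frac{\hatw(\x)}{w(\x)}\le\log(1+\epsilon\rho)$. Because $\epsilon,\rho\ge0$, this constant is nonnegative, so multiplying it against the nonnegative measure $\hatw(\x)q(\x)$ preserves the inequality direction and I can pull it outside the integral:
\begin{align*}
\KL(\pg\|p) \le \log(1+\epsilon\rho)\int_\x \hatw(\x)q(\x)\,d\x.
\end{align*}

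The remaining step is to control the total mass $\int_\x\hatw(\x)q(\x)\,d\x$. Applying $\hatw(\x)\le w(\x)+\epsilon$ once more and using $\int w(\x)q(\x)\,d\x=\int p(\x)\,d\x=1$ together with $\int q(\x)\,d\x=1$ yields $\int_\x \hatw(\x)q(\x)\,d\x\le 1+\epsilon$, and combining the two bounds gives $\KL(\pg\|p)\le(1+\epsilon)\log(1+\epsilon\rho)$. I expect the main subtlety to be bookkeeping around the direction of each approximation: unlike the forward-KL proof, here the \emph{upper} estimate of $\hatw$ is what tightens the bound in both places, and one must verify that $\log(1+\epsilon\rho)\ge0$ before extracting it from the integral. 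A secondary point worth noting is that the factor $(1+\epsilon)$ arises precisely because $\pg=\hatw q$ need not integrate to exactly one; the excess mass $\epsilon$ is what separates this bound from the naive $\log(1+\epsilon\rho)$.
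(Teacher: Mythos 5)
Your proof is correct and takes essentially the same route as the paper: both substitute $\pg = \hatw q$ from Lemma 1, invoke $\hatw(\x) \le w(\x) + \epsilon$, bound the logarithmic factor by the constant $\log(1+\epsilon\rho)$, and finish with $\int_\x \hatw(\x)q(\x)\,d\x \le 1+\epsilon$. The only real difference is bookkeeping: the paper substitutes $w+\epsilon$ for $\hatw$ in both the density and the logarithm at once, arriving at $\int_\x (p(\x)+\epsilon q(\x))\log\frac{p(\x)+\epsilon q(\x)}{p(\x)}\,d\x$, whereas you keep $\hatw q$ as the measure and bound the log factor and the total mass separately --- a slightly cleaner ordering, since it never needs the map $t\mapsto t\log(t/p(\x))$ to be monotone when enlarging $\hatw$ inside the integrand.
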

The proofs are included in the supplementary material.

These theorems imply three things:
(1) If $w=\hatw$, then $\epsilon=0$, and both divergences go to 0, regardless of $\rho$;
(2) If $p=q$, then the error in importance weight estimation is the only
source of error in modeling $p$ with $\pg$. This error can be arbitrarily large, as
$\epsilon$ becomes large; and
(3) If $p\neq q$ then $\rho>1$, and it contributes to the error in modeling $p$ with $\pg$.

\section{Experiments} \label{sec:experiments}

We validate GANDI on three different robot planning 
tasks that involve continuous state and action spaces and finite depths.
The purpose of these experiments is to verify the hypothesis
that learning an action sampling distribution
improves planning efficiency relative to a standard uniform sampler,
and a GANDI is more data efficient than a standard GAN that is
only trained with on-target samples.

We have three tasks that might occur in a warehouse.
The first is a bin-packing task, where a robot has to
pack different numbers of objects of different sizes into a
relatively tight bin. The second task is to stow objects into
crowded bins, where there already are obstacles. The final
task is a reconfiguration task,
where a robot needs to reconfigure five randomly placed
moveable objects in order to make room to reach
a target object. In the first two tasks, the robot is not 
allowed to move the objects once they are placed, which leads
 to a large volume of dead-end 
states that cause wasted computational effort for a 
planner with an uniform action sampler. In the third task,
we have no dead-end states, but a planner could potentially
waste computational effort in trying no-progress actions.
For all tasks, the robot is only allowed to grasp
objects from the side; this is to simulate a common scenario
in a warehouse environment, with objects in 
a place covered on top, such as a shelf.
For the first two experiments, we use depth-first-search. This
is equivalent to having a heuristic that estimates a cost-to-go
by number of objects placed, since we cannot move objects
once they are placed. For the last experiment we use 
breadth-first-search as our planner, since there is no
simple heuristic function evident. For all
cases, the number of action samples per node, $k$, is 3.
We use Algorithm~\ref{alg:planalgo} with a given action
sampler.

Throughout the three tasks, we compare three different 
action samplers in terms of success rate within a given
time limit: the uniform sampler that uniformly samples
an action from a specified action space, a standard GAN
 trained only with on-target samples, and GANDI, which is 
trained with both on and off-target samples. We use the same
architecture for both the standard GAN and GANDI, and
perform 100 repetitions to obtain the 95\% confidence intervals
for all the plots. The architecture description is in
the appendix.

A crucial drawback of generative adversarial networks
is that they lack an evaluation metric; thus it is
difficult to know when to stop training. We
deal with this by testing weights from all epochs on
10 trials, and then picking the weights with the 
best performance, with which we performed 100 additional
repetitions to compute the success rates.
\begin{figure*}
\centering
	\begin{subfigure}[b]{0.23\textwidth}	
		\centering
		\includegraphics[width=2.5cm,height=2.5cm]{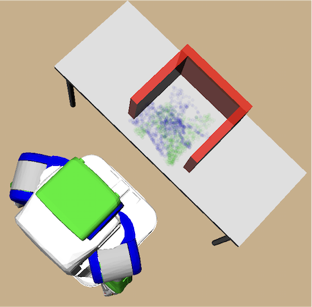}
		\caption{GAN's action sampling distribution }
		\label{fig:gan_ex}
	\end{subfigure} 
	\begin{subfigure}[b]{0.23\textwidth}	
		\centering
		\includegraphics[width=2.5cm,height=2.5cm]{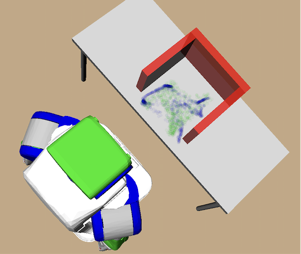}
		\caption{GANDI's action sampling distribution}
		\label{fig:gandi_ex}
	\end{subfigure} 
  \begin{subfigure}[b]{0.23\textwidth}	
		\centering
		\includegraphics[width=2.5cm,height=2.5cm]{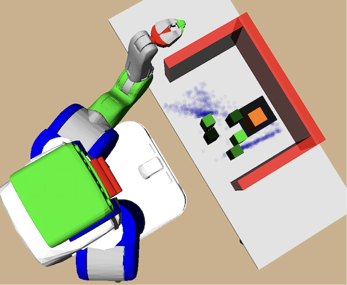}
		\caption{A GANDI's action distribution }
		\label{fig:rearrangement_ex}
	\end{subfigure} 
  \begin{subfigure}[b]{0.23\textwidth}	
		\centering
		\includegraphics[width=2.5cm,height=2.5cm]{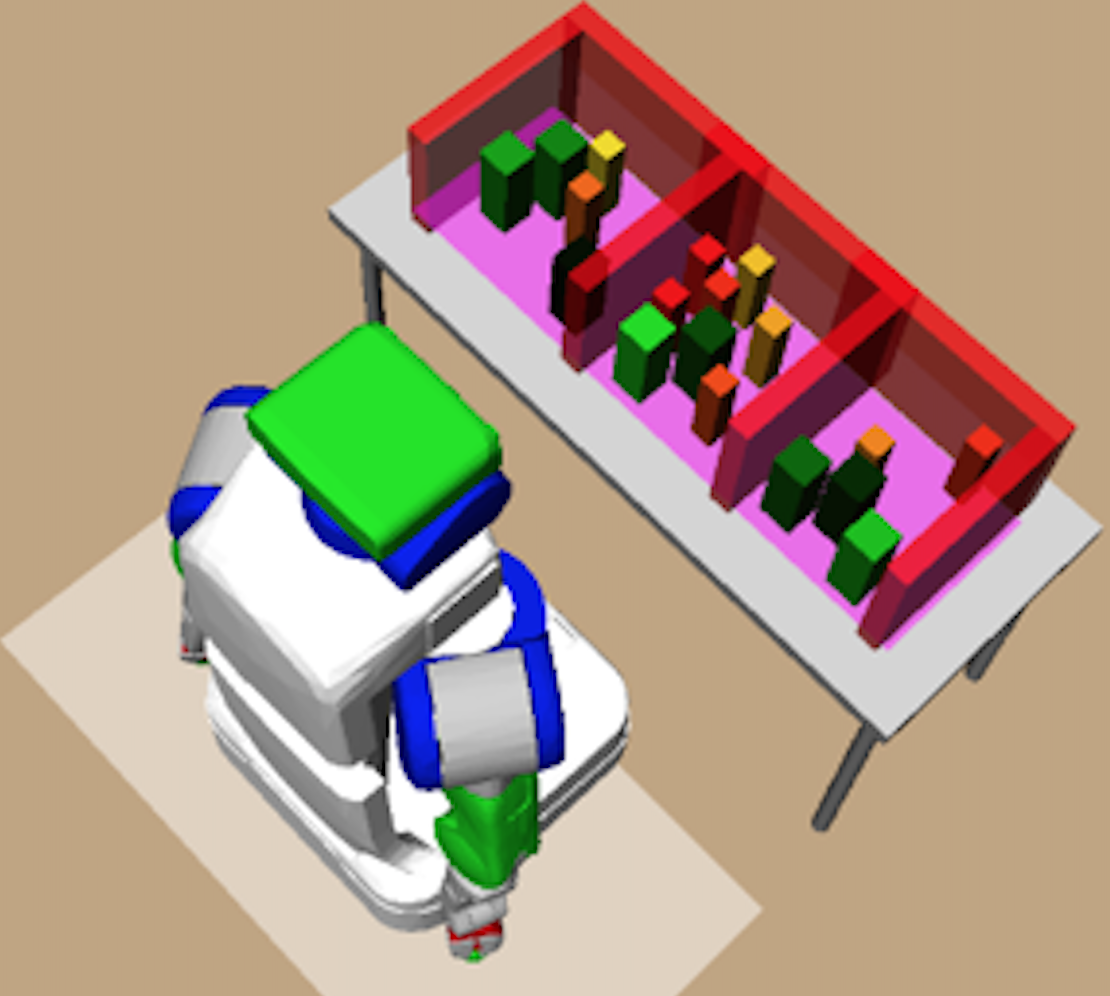}
		\caption{A stow domain problem instance}
		\label{fig:stow_ex}
	\end{subfigure} 
		\caption{The left two figures show examples from the bin packing domain when 20 episodes of training data are used. Green indicates the on-target samples, and blue indicates the learned distributions. Figure \ref{fig:rearrangement_ex} shows an action distribution for the reconfiguration domain when given 35 training episodes.}
\end{figure*} 
\begin{figure*}
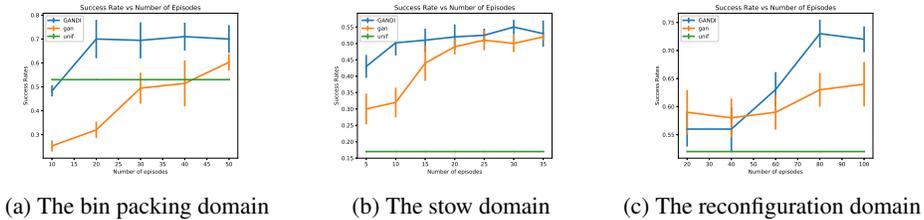

\centering
	\begin{subfigure}[b]{0.23\textwidth}	
		\centering
		\includegraphics[height=2.5cm]{./figure/placement}
    \caption{The bin packing domain}
		\label{fig:bin_packing_rate}
	\end{subfigure} 
	\begin{subfigure}[b]{0.23\textwidth}	
		\centering
		\includegraphics[height=2.5cm]{./figure/stow_succ_rate}
    \caption{The stow domain }
		\label{fig:stow_rate}
	\end{subfigure} 
	\begin{subfigure}[b]{0.24\textwidth}	
		\centering
		\includegraphics[height=2.5cm]{./figure/rearrangement_succ_rate}
    \caption{The reconfiguration domain}
		\label{fig:rearrangement_rate}
	\end{subfigure} 
	\caption{Plots of success rate vs. number of training episodes for all three tasks}
\end{figure*} 
\subsection{Bin packing problem}
In this task, a robot has to move 5, 6, 7 or 8 objects into
a region of size 0.3m by 1m (width by length). 
The size of each object is uniformly random between 0.05m to 0.11m,
depending on how many objects the robot has to pack. 
A problem instance is defined by the
number of objects and the size of each object, 
$\prob = [n_{obj},O_{size}]$. A state is defined by the
object placements. For a given problem instance,
all objects have the same size. An example of a solved problem 
instance with $n_{obj}=5$ and $O_{size}=0.11$m is given
 in Figure~\ref{fig:deadend_ex} (right). 

The action space consists of the two dimensional (x,y) 
locations of objects inside the bin, and a uniform action sampler
uniformly samples these values from this region. The robot 
base is fixed. The planning depth varies from 5 to 8, depending
on how many objects need to be placed. This means that
plans consist of 10 to 16 decision variables. 
Clearly, there is a large of volume
of dead-end actions: any action that puts objects down in the 
front of the bin early will lead to a dead-end state.

Figure~\ref{fig:bin_packing_rate} shows the comparison among
GANDI, standard GAN, and uniform action sampling in terms of
success rate when given 5.0 seconds to solve a problem instance. 
We can see the data efficiency of GANDI: with 
20 training episodes, it outperforms the uniform sampler, 
while a standard GAN requires 50 training
episodes to do so. The uniform sampler can only solve
 about 50\% of the problem instances within this time limit, 
while GANDI can solve more than 70\%.

We also compare the distributions for GAN and
GANDI when the same number of training data are given.
Figures~\ref{fig:gan_ex} and~\ref{fig:gandi_ex} show
1000 samples  from GAN and GANDI for packing 5 objects. 
While GANDI learns to avoid the front-middle locations, GAN is
still close to a uniform distribution, and has
a lot of samples in this region which lead to dead-end
states. GANDI focuses its samples to the corners at the back or
the front so that it has spaces for all 5 objects.

\subsection{Stowing objects into crowded bins}
In this task, a robot has to stow 8 objects into 
three different bins, where there already are 10 obstacles. 
A bin is of size 0.4m by 0.3m, an obstacle
is of size 0.05m by 0.05m, and the objects to be placed down
are all of size 0.07m by 0.07m. A problem instance is defined
by the (x,y) locations of 10 obstacles, each of which is randomly
distributed in one of the bins. 
Figure~\ref{fig:stow_ex} shows an instance of a solved stow problem.

The action space for this problem 
consists of (x,y) locations of an object to be placed, 
and the robot's 
(x,y) base pose. This makes 4 dimensional continuous action-space.
The planning depth is always 8, for placing 8 objects.
Thus plans consist of of 36 continuous decision variables.
Again, there is a large volume of dead-end actions, 
similarly to the previous problem: 
putting objects down without consideration of poses of
 later objects can potentially block collision-free 
paths for placing them.

Figure~\ref{fig:stow_rate} compares the success rates of 
the algorithms with a 30-seconds time limit for planning. For the uniform
sampler, we sample first an object placement pose, and then
sample a base pose that can reach the object at its new 
location without colliding with other objects. Unlike the 
previous task, learning-based approaches significantly
outperform the uniform sampling approach for this task.
This is because there is an unstated constraint between
the object placement location and base pose, which is that
the location must be within a reachable distance from
the sampled robot base pose. 
Again, we can observe the data efficiency of GANDI compared to GAN.
When the number of training data points is small, it outperforms it.

\subsection{Reconfiguration planning in a tight space}
In this task, a robot has to reconfigure movable obstacles out of the
way in order to find a collision-free inverse-kinematics solution for its 
left-arm to reach the target object. 
There are five movable obstacles in this problem,
each with size 0.05m by 0.05m, and the target object of size 0.07m by 0.07m, 
and the reconfiguration must happen within a bin, which is of size
0.7m by 0.4m. A problem instance is defined by (x,y) locations 
of the movable obstacles and the target object.  The movable obstacles are randomly 
distributed within the bin; the target object location is distributed along the back of the bin.
Figure \ref{fig:rearrangement_ex} shows an example of a rearrangement 
problem instance at its initial state, with the black region indicating
the distribution of target object locations.

An action specification consists of one discrete value and two continuous values:
what object to move and the (x,y) placement pose of the object being moved.
There is no fixed depth. For both the uniform random sampler and 
the learned generative model, we uniformly at random choose an object to move. 
The robot base is fixed, and the robot is not allowed to use its right arm.

Figure~\ref{fig:rearrangement_rate} compares the success rates of the
algorithms with a 10-seconds time limit for planning. In this problem, 
the learning based approaches outperform the uniform sampler 
even with a small number of training data points. The relationship 
between GANDI and GAN is similar to the previous experiment, except 
that GANDI and GAN are within the each other's confidence interval
when a small number of training points are used. Eventually,
GANDI comes to clearly outperform GAN.

We would like to know if GANDI's distribution indeed
assigns low probabilities to no-progress actions. 
In Figure~\ref{fig:rearrangement_ex},
we show GANDI's distribution of obsject placements after training on 35 episodes. 
The left top corner of the bin is empty because there are no
collision-free inverse-kinematics solution for that region
\footnote{ The robot's left arm will collide
with the bin}. As the figure shows, there are no placement samples 
in front of the target object, but only on the sides that 
 would contribute to clearing space for the robot's left arm to reach
the target object.

\section{Conclusion}
We presented GANDI, a generative model learning
algorithm that uses both on-target and
cheaper off-target samples for data efficiency using importance-ratio estimation.
We provided guarantees on how the error in importance-ratio estimation affects the
performance of the learned model, and  illustrated
that this learning algorithm is effective for learning an 
action sampling distribution for guiding the search of a 
planner in difficult robot planning problems.

\newpage
\bibliographystyle{aaai}
\bibliography{references}

\begin{thebibliography}{}

\bibitem[\protect\citeauthoryear{Arjovsky and Bottou}{2017}]{Arjovsky2017}
Arjovsky, M., and Bottou, L.
\newblock 2017.
\newblock Towards principled methods for training generative adversarial
  networks.
\newblock In {\em International Conference on Learning Representations}.

\bibitem[\protect\citeauthoryear{Bubeck \bgroup et al\mbox.\egroup
  }{2009}]{BubeckNIPS2009}
Bubeck, S.; Munos, R.; Stoltz, G.; and Szepesv\'{a}ri, C.
\newblock 2009.
\newblock Online optimization in {X}-armed bandits.
\newblock In {\em Advances in Neural Information Processing Systems}.

\bibitem[\protect\citeauthoryear{Bus\c{o}niu \bgroup et al\mbox.\egroup
  }{2013}]{LucianADPRL2011}
Bus\c{o}niu, L.; Daniels, A.; Munos, R.; and Babu\v{s}ka, R.
\newblock 2013.
\newblock Optimistic planning for continuous-action deterministic systems.
\newblock In {\em Adaptive Dynamic Programming And Reinforcement Learning}.

\bibitem[\protect\citeauthoryear{Garrett, Kaelbling, and
  Lozano-P\'{e}rez}{2017}]{GarrettRSS2017}
Garrett, C.~R.; Kaelbling, L.~P.; and Lozano-P\'{e}rez, T.
\newblock 2017.
\newblock Sample-based methods for factored task and motion planning.
\newblock In {\em Robotics: Science and Systems}.

\bibitem[\protect\citeauthoryear{Gelly and Silver}{2007}]{GellyICML07}
Gelly, S., and Silver, D.
\newblock 2007.
\newblock Combining online and offline knowledge in {UCT}.
\newblock {\em International Conference on Machine learning}.

\bibitem[\protect\citeauthoryear{Goodfellow \bgroup et al\mbox.\egroup
  }{2014}]{GoodfellowNIPS2014}
Goodfellow, I.; Pouget-Abadie, J.; Mirza, M.; Xu, B.; Warde-Farley, D.; Ozair,
  S.; Courville, A.; and Bengio, Y.
\newblock 2014.
\newblock Generative adversarial nets.
\newblock In {\em Advances in Neural Information Processing Systems}.

\bibitem[\protect\citeauthoryear{Huang \bgroup et al\mbox.\egroup
  }{2007}]{HuangNIPS2007}
Huang, J.; Smola, A.; Gretton, A.; Borgwardt, K.; and Sch\"{o}lkopf, B.
\newblock 2007.
\newblock Correcting sample selection bias by unlabeled data.
\newblock In {\em Advances in Neural Information Processing Systems}.

\bibitem[\protect\citeauthoryear{Kanamori, Hido, and
  Sugiyama}{2009}]{KanamoriJMLR2009}
Kanamori, T.; Hido, S.; and Sugiyama, M.
\newblock 2009.
\newblock A least-squares approach to direct importance estimation.
\newblock {\em Journal of Machine Learning Research} 10.

\bibitem[\protect\citeauthoryear{Kavraki \bgroup et al\mbox.\egroup
  }{2000}]{prm}
Kavraki, L.~E.; Svestka, P.; Latombe, J.; and Overmars, M.~H.
\newblock 2000.
\newblock Probabilistic roadmaps for path planning in high-dimensional
  configuration spaces.
\newblock {\em IEEE Transactions on Robotics and Automation} 12.

\bibitem[\protect\citeauthoryear{Kim, Kaelbling, and
  Lozano-P\'{e}rez}{2017}]{KimICRA17}
Kim, B.; Kaelbling, L.~P.; and Lozano-P\'{e}rez, T.
\newblock 2017.
\newblock Learning to guide task and motion planning using score-space
  representation.
\newblock In {\em International Conference on Robotics and Automation}.

\bibitem[\protect\citeauthoryear{Kingma and Welling}{2014}]{VAE}
Kingma, D.~P., and Welling, M.
\newblock 2014.
\newblock Auto-encoding variational bayes.
\newblock In {\em International Conference on Learning Representations}.

\bibitem[\protect\citeauthoryear{Koller and
  Friedman}{2009}]{KollerFriedman2009}
Koller, D., and Friedman, N.
\newblock 2009.
\newblock {\em Probabilistic Graphical Models: Principles and Techniques}.
\newblock The MIT Press.

\bibitem[\protect\citeauthoryear{Kuffner and LaValle}{2000}]{rrt}
Kuffner, J., and LaValle, S.
\newblock 2000.
\newblock {RRT}-connect: An efficient approach to single-query path planning.
\newblock In {\em International Conference on Robotics and Automation}.

\bibitem[\protect\citeauthoryear{Mansley, Weinstein, and
  Littman}{2011}]{MansleyICAPS2011}
Mansley, C.; Weinstein, A.; and Littman, M.~L.
\newblock 2011.
\newblock Sample-based planning for continuous action {M}arkov decision
  processes.
\newblock In {\em International Conference on Planning and Scheduling}.

\bibitem[\protect\citeauthoryear{Munos.}{2011}]{MunosNIPS11}
Munos., R.
\newblock 2011.
\newblock Optimization of deterministic functions without the knowledge of its
  smoothness.
\newblock In {\em Advances in Neural Information Processing Systems}.

\bibitem[\protect\citeauthoryear{Precup, Sutton, and
  Dasgupta}{2001}]{PrecupICML2001}
Precup, D.; Sutton, R.; and Dasgupta, S.
\newblock 2001.
\newblock Off-policy temporal-difference learning with function approximation.
\newblock In {\em International Conference on Machine Learning}.

\bibitem[\protect\citeauthoryear{Schulman \bgroup et al\mbox.\egroup
  }{2014}]{SchulmanIJRR14}
Schulman, J.; Duan, Y.; Ho, J.; Lee, A.; Awwal, I.; Bradlow, H.; Pan, J.;
  Patil, S.; Goldberg, K.; and Abbeel, P.
\newblock 2014.
\newblock Motion planning with sequential convex optimization and convex
  collision checking.
\newblock {\em International Journal on Robotics Research}.

\bibitem[\protect\citeauthoryear{Silver \bgroup et al\mbox.\egroup
  }{2016}]{alphago}
Silver, D.; Huang, A.; Maddison, C.; Guez, A.; Sifre, L.; van~den Driessche,
  G.; Schrittwieser, J.; Antonoglou, I.; Panneershelvam, V.; Lanctot, M.;
  Dieleman, S.; Grewe, D.; Nham, J.; Kalchbrenner, N.; Sutskever, I.;
  Lillicrap, T.; Leach, M.; Kavukcuoglu, K.; Graepel, T.; and Hassabis, D.
\newblock 2016.
\newblock Mastering the game of {G}o with deep neural networks and tree search.
\newblock {\em Nature}.

\bibitem[\protect\citeauthoryear{Sugiyama \bgroup et al\mbox.\egroup
  }{2008}]{SugiyamaNIPS2008}
Sugiyama, M.; Nakajima, S.; H.~Kashima, P.~B.; and Kawananbe, M.
\newblock 2008.
\newblock Direct importance estimation with model selection and its application
  to covariate shift adaptation.
\newblock In {\em Advances in Neural Information Processing Systems}.

\bibitem[\protect\citeauthoryear{Sutton and Barto}{1998}]{SuttonBarto1998}
Sutton, R., and Barto, A.~G.
\newblock 1998.
\newblock {\em Reinforcement Learning: An Introduction}.
\newblock A Bradford Book.

\bibitem[\protect\citeauthoryear{Toussaint}{2015}]{ToussaintIJCAI15}
Toussaint, M.
\newblock 2015.
\newblock Logic-geometric programming: An optimization-based approach to
  combined task and motion planning.
\newblock In {\em International Joint Conference on Artificial Intelligence}.

\bibitem[\protect\citeauthoryear{Vega-Brown and Roy}{2016}]{VBrownWAFR2016}
Vega-Brown, W., and Roy, N.
\newblock 2016.
\newblock Asymptotically optimal planning under piecewise-analytic constraints.
\newblock In {\em Workshop on the Algorithmic Foundations of Robotics}.

\bibitem[\protect\citeauthoryear{Zucker \bgroup et al\mbox.\egroup
  }{2013}]{ZuckerIJRR13}
Zucker, M.; Ratliff, N.; Dragan, A.; Pivtoraiko, M.; Klingensmith, M.; Dellin,
  C.; Bagnell, J.~A.; and Srinivasa, S.
\newblock 2013.
\newblock {CHOMP}: Covariant hamiltonian optimization for motion planning.
\newblock {\em International Journal on Robotics Research}.

\end{thebibliography}


\begin{thebibliography}{}

\bibitem[\protect\citeauthoryear{Goodfellow \bgroup et al\mbox.\egroup
  }{2014}]{GoodfellowNIPS2014}
Goodfellow, I.; Pouget-Abadie, J.; Mirza, M.; Xu, B.; Warde-Farley, D.; Ozair,
  S.; Courville, A.; and Bengio, Y.
\newblock 2014.
\newblock Generative adversarial network.
\newblock In {\em Advances in Neural Information Processing Systems}.

\end{thebibliography}
\end{document}